%
\documentclass{article}
\usepackage{arxiv}
\usepackage{amsmath}
\usepackage{amsthm}
\usepackage{graphicx}
\usepackage{algorithm}
\usepackage{algorithmic}
\usepackage{color}
\usepackage{cite}

\newcommand{\temax}{t_{\mathrm{max}}}
\newcommand{\tepoch}{t_{\mathrm{epoch}}}

\newcommand{\Bhat}{\hat{B}}
\newcommand{\onelambda}{$(1+\lambda)$-EA\xspace}
\newcommand{\oneone}{$(1+1)$-EA\xspace}
\usepackage{xspace,dsfont}
\newcommand{\ignore}[1]{}

\usepackage{amsmath,amssymb,amsfonts}
\def\P#1{\textsf{P}\{{#1}\}}
\def\bigP#1{\textsf{P}\left\{{#1}\right\}}
\def\Real{{\mathbb R}}
\usepackage{multirow}
\let\texdisplaystyle\displaystyle
\def\displaytotextstyle{\textstyle\let\displaystyle\texdisplaystyle}

\newenvironment{talign}
 {\let\displaystyle\displaytotextstyle\align}
 {\endalign}
\newenvironment{talign*}
 {\let\displaystyle\displaytotextstyle\csname align*\endcsname}
 {\endalign}
\newtheorem{definition}{Definition}
\newtheorem{lemma}{Lemma}
\newtheorem{theorem}{Theorem}
\newtheorem{corollary}{Corollary}

\newcommand{\gsemo}{GSEMO\xspace}
\newcommand{\swgsemo}{SW-GSEMO\xspace}

%

%

%
%

\title{Archive-based Single-Objective Evolutionary Algorithms for Submodular Optimization}

    \author{Frank Neumann\\
Optimisation and Logistics,\\
School of Computer and Mathematical Sciences,\\
The University of Adelaide,\\
Australia
\And
 G{\"u}nter Rudolph\\
Computational Intelligence,\\
Department of Computer Science,\\ TU Dortmund University,\\ Germany
}

\begin{document}
\maketitle              
\begin{abstract}
Constrained submodular optimization problems play a key role in the area of combinatorial optimization as they capture many NP-hard optimization problems. So far, Pareto optimization approaches using multi-objective formulations have been shown to be successful to tackle these problems while single-objective formulations lead to difficulties for algorithms such as the \oneone due to the presence of local optima. We introduce for the first time single-objective algorithms that are provably successful for different classes of constrained submodular maximization problems. Our algorithms are variants of the \onelambda and \oneone and increase the feasible region of the search space incrementally in order to deal with the considered submodular problems.

\keywords{evolutionary algorithms \and submodular optimization  \and runtime analysis \and theory.}
\end{abstract}

\section{Introduction}
Many combinatorial optimization problems that face diminishing returns can be stated in terms of a submodular function under given set of constraints~\cite{DBLP:books/cu/p/0001G14}.
The maximization of a non-monotone submodular function even without constraints includes the classical maximum cut problem in graphs and is therefore an NP-hard combinatorial optimization problem that cannot be solved in polynomial time unless $P = NP$ but different types of approximation algorithms are available \cite{DBLP:journals/siamcomp/FeigeMV11}. Monotone submodular functions play a special role in the area of optimization as they capture import coverage and influence maximization problems in networks.
The maximization of monotone submodular functions is NP-hard even for the case of simple constraint that limits the number of elements that can be chosen, but greedy algorithms have shown to obtain best possible approximation guarantees for different types of constraints~\cite{DBLP:journals/mp/NemhauserWF78,DBLP:books/cu/p/0001G14}.
At best, one can hope to develop a method that can provide an $\alpha$-approximation in polynomial time, i.e., a solution with a value of at least $\alpha\,f(x^*)$ where $\alpha\in (0,1)$ and $x^*$ is an optimal solution of the submodular function $f(\cdot)$. Such an algorithm was proposed in \cite{DBLP:journals/mp/NemhauserWF78} where it was proved that a greedy method can find an $(1-1/e)$-approximation of the maximum of a submodular function in polynomial time.

Although the $(1+1)$-EA shares many characteristics with a greedy algorithm, it was proven in \cite[Thm.~1]{DBLP:journals/ec/FriedrichN15}, that it can get trapped in local optima even for monotone submodular problems with a uniform constraint requiring exponential time to achieve an
approximation better than $1/2 + \varepsilon$ for any given $\varepsilon > 0$.

Due to this disappointing result, the focus shifted to other types of evolutionary algorithms. Since multiobjective EAs have proven successful in the treatment of combinatorial problems in the past \cite{DBLP:journals/nc/NeumannW06,DBLP:journals/ec/FriedrichHHNW10,DBLP:conf/emo/KnowlesWC01}, the variant GSEMO \cite{GielCEC2003} has been applied to the maximiziation of a submodular function with cardinality constraint. Guided by the proof in \cite{DBLP:journals/mp/NemhauserWF78} it was proven in \cite{DBLP:journals/ec/FriedrichN15} that the GSEMO can find a $(1-1/e)$-approximation in polynomial time with small failure probability. In the sequel there have been several publications in this direction considering different variants of GSEMO together with appropriate muli-objective formulations treating the considered constraint as an additional objective~\cite{DBLP:conf/ijcai/QianSYT17,DBLP:conf/ijcai/Crawford21,DBLP:conf/nips/QianYZ15,DBLP:journals/ai/QianYTYZ19,DBLP:journals/ai/RoostapourNNF22,DBLP:conf/ppsn/NeumannN20}.

Recently, the \emph{sliding window} GSEMO (SW-GSEMO) has been introduced in~\cite{NeumannWittECAI23} which outscores the performance of the original GSEMO significantly of large problem instances. The improvement here comes from a sliding window selection method that selects parent individuals dependent on the anticipated progress in time during the optimization run.
Motivated by the insights gained through the development of SW-GSEMO, we show that singleobjective EAs with only few small algorithmic changes to the standard versions are able 
to achieve the same theoretical and competitive practical performance as their multiobjective counterpart, i.e., it is not necessary to apply \emph{multiobjective} EAs. This result potentially opens a new area in the development of evolutionary algorithms for submodular problems which has so far relied on the use of multi-objective problem formulations and algorithms.

The outline of the paper is as follows.
In Section \ref{sec:prelim} we introduce terminology and basic results regarding submodular functions. Section \ref{sec:noArc} provides theoretical results for single-objective EAs without archive, whereas Section~\ref{sec:withArc} presents the proof that an (1+1)-EA can successfully solve the submodular problem if it is equipped with a specific kind of archive. The theoretical findings are supported by experimental results on graph cover problems in Section~\ref{sec:Exp}. Finally, Section~\ref{sec:concl} reports on our conclusions.

\section{Preliminaries}\label{sec:prelim}

\begin{definition}
Let $U$ be a finite ground set and $f:2^U\to\Real^+_0$. If for all $A,B \subseteq U$ holds\\
a) $f(A\cup B) + f(A\cap B) \le f(A) + f(B)$
then $f$ is termed \emph{submodular};\\
b) $f(A)\le f(B)$ then $f$ is called \emph{monotone}.
\end{definition}

Many functions arising in combinatorial optimization are submodular. For example, let $A_1,\ldots,A_n$ be subsets of a finite universe $U$. Then the \emph{coverage function} 
$f(S) = |\cup_{i\in S} A_i|$ with $S\subseteq\{1,\ldots,n\}$ is submodular. Submodular functions are also called functions of diminishing returns, as demonstrated in Figure~\ref{fig:setCover}: The later we add the blue set to the increasing gray set, the smaller is the gain of area.

\begin{theorem}[see \cite{DBLP:journals/mp/NemhauserWF78}, Proposition 2.1]\label{T:equiv}
The following conditions are equivalent to the definition of submodular set functions:\\
a) for all $A\subseteq B\subseteq U$ and $x\notin B$
$$f(A\cup \{x\}) - f(A) \ge f(B\cup \{x\}) - f(B)$$
b) for all $A\subseteq B\subseteq U$
$$f(B) \le f(A) + \sum_{x\in B\setminus A} (f(A\cup\{x\})-f(A))$$
\end{theorem}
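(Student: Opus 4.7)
The plan is to prove the equivalences via the cycle (definition) $\Rightarrow$ (a) $\Rightarrow$ (b) $\Rightarrow$ (a) $\Rightarrow$ (definition), routing both nontrivial implications through the diminishing-returns condition (a), which is the most flexible single-element formulation to work with.

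For (definition) $\Rightarrow$ (a), apply submodularity directly to the pair $A \cup \{x\}$ and $B$: since $A \subseteq B$ and $x \notin B$, their intersection is $A$ and their union is $B \cup \{x\}$, so rearranging $f(B \cup \{x\}) + f(A) \le f(A \cup \{x\}) + f(B)$ yields the diminishing-returns inequality. For (a) $\Rightarrow$ (b), enumerate $B \setminus A = \{x_1, \ldots, x_k\}$, set $A_i = A \cup \{x_1, \ldots, x_i\}$ so that $A_0 = A$ and $A_k = B$, and telescope $f(B) - f(A) = \sum_{i=1}^{k}(f(A_i) - f(A_{i-1}))$; applying (a) with the small set $A$, the large set $A_{i-1}$, and the new element $x_i \notin A_{i-1}$ bounds each summand above by $f(A \cup \{x_i\}) - f(A)$, which gives (b). The companion direction (a) $\Rightarrow$ (definition) uses two parallel telescoping chains: enumerate $A \setminus B = \{x_1, \ldots, x_p\}$, and compare the chain $A \cap B \subseteq (A \cap B) \cup \{x_1\} \subseteq \cdots \subseteq A$ with $B \subseteq B \cup \{x_1\} \subseteq \cdots \subseteq A \cup B$. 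Applying (a) termwise (each small-chain set is contained in the corresponding large-chain set, and $x_i$ lies outside the large-chain set) yields $f(A) - f(A \cap B) \ge f(A \cup B) - f(B)$, which is the submodularity inequality.

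The main obstacle is (b) $\Rightarrow$ (a), because (b) provides only upper bounds on $f(B)$ in terms of $f(A)$ and marginal gains, whereas (a) is a pointwise comparison of two marginals. The key first move is to recover the two-element special case of (a): apply (b) to the pair $(A, A \cup \{x, y\})$ for $x, y \notin A$ with $x \neq y$; the right-hand sum then collapses to exactly two terms, giving $f(A \cup \{x, y\}) + f(A) \le f(A \cup \{x\}) + f(A \cup \{y\})$, which rearranges into the instance of (a) with the small set $A$, large set $A \cup \{y\}$, and new element $x$. To lift this to arbitrary $A \subseteq B$ with $x \notin B$, induct on $|B \setminus A|$: picking any $y \in B \setminus A$ and setting $A' = A \cup \{y\}$, the two-element case supplies $f(A \cup \{x\}) - f(A) \ge f(A' \cup \{x\}) - f(A')$, while the inductive hypothesis supplies $f(A' \cup \{x\}) - f(A') \ge f(B \cup \{x\}) - f(B)$, and chaining the two yields the full diminishing-returns inequality.
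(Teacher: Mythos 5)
Your argument is correct in all four implications: the instantiation of the definition with the pair $A\cup\{x\}$ and $B$, the two telescoping-chain arguments, and the recovery of the two-element case of (a) from (b) followed by induction on $|B\setminus A|$ all check out (and the needed side conditions, e.g.\ $x\neq y$ and $x\notin A\cup\{y\}$, are in place). Note that the paper itself gives no proof of this theorem --- it is quoted from Nemhauser, Wolsey and Fisher (Proposition 2.1) --- so there is nothing internal to compare against; your proof is essentially the standard single-element-exchange argument from that reference, and it is a valid self-contained justification of the statement as used later (in particular of part (b), which is the only piece invoked in the proof of Lemma~\ref{L:delta}).
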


\begin{figure}[t]
\centering
\includegraphics[width=0.33\textwidth]{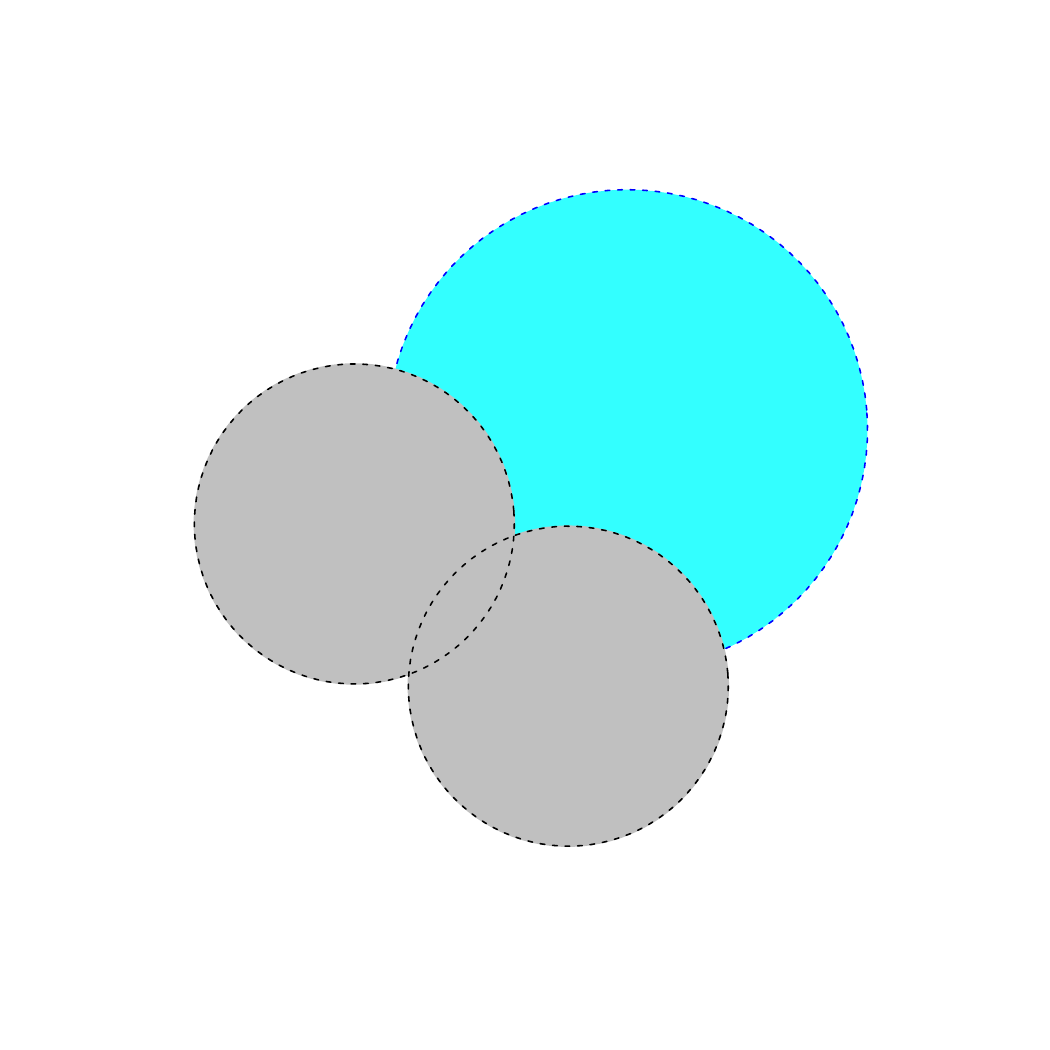}
\hfill\includegraphics[width=0.33\textwidth]{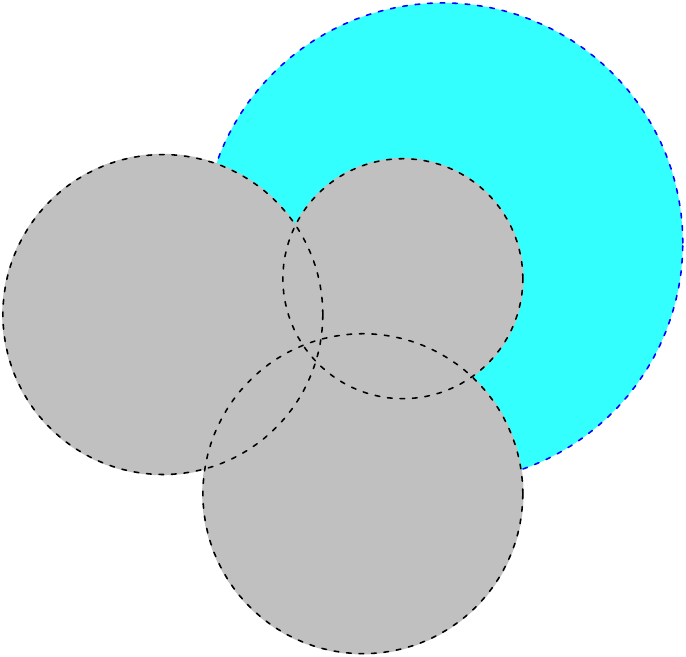}\hfill\includegraphics[width=0.33\textwidth]{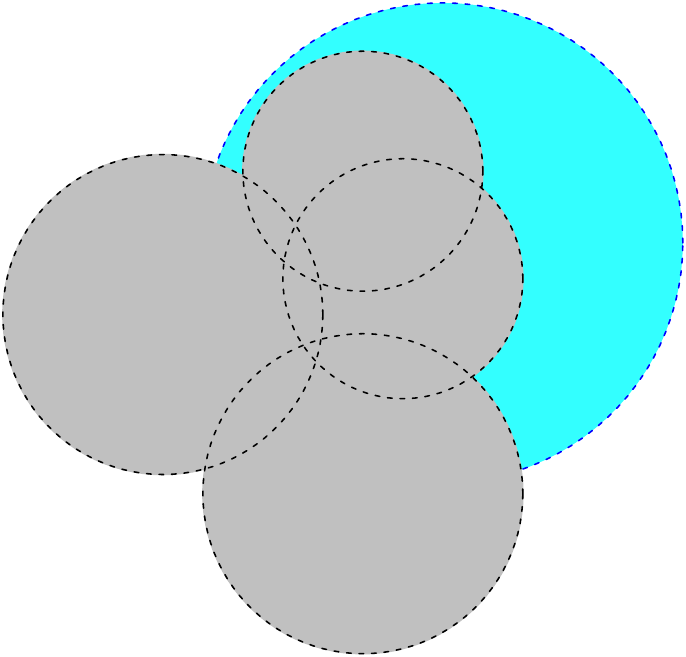}
\caption{Gray set $A$ in the left figure is a subset of the gray set $B$ in the middle figure which in turn is a subset of the gray set $C$ in the right figure. Adding the blue set leads to a lower gain of area the larger the gray set is.}
\label{fig:setCover}
\end{figure}

For later purpose we present an auxiliary result which can be established from well-known properties of monotone submodular functions~\cite{DBLP:journals/mp/NemhauserWF78,DBLP:books/cu/p/0001G14}. We present it here together with the proof as it is crucial for optimizing monotone submodular functions with a uniform constraint.
\begin{lemma}\label{L:delta}
Let $f$ be a monotone submodular set function, $X^*$ an optimal solution and $X$ some feasible solution. Then
\begin{equation}\label{eq:delta}
f(X^*) \le f(X) + r\,\delta
\end{equation}
where $\delta=\max\limits_{x\in X^*\setminus X}(f(X\cup\{x\})-f(X))$ and $r=|X^*|$.
\end{lemma}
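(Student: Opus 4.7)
The plan is to invoke part b) of Theorem \ref{T:equiv} together with monotonicity. Specifically, I would set $A := X$ and $B := X \cup X^*$ so that $A \subseteq B$ and $B \setminus A = X^* \setminus X$. Applying the inequality then gives
\[
f(X \cup X^*) \;\le\; f(X) + \sum_{x \in X^* \setminus X} \bigl(f(X \cup \{x\}) - f(X)\bigr).
\]

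Next I would bound the left-hand side from below via monotonicity: since $X^* \subseteq X \cup X^*$, we have $f(X^*) \le f(X \cup X^*)$. Combined with the previous inequality, this yields
\[
f(X^*) \;\le\; f(X) + \sum_{x \in X^* \setminus X} \bigl(f(X \cup \{x\}) - f(X)\bigr).
\]

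Finally, I would bound each summand by the maximum $\delta$ and bound the number of summands by $|X^* \setminus X| \le |X^*| = r$, obtaining the claimed inequality $f(X^*) \le f(X) + r\,\delta$. There is no real obstacle here; the only subtle step is choosing $B = X \cup X^*$ (rather than $B = X^*$) so that the hypothesis $A \subseteq B$ of Theorem \ref{T:equiv}b) is satisfied for arbitrary feasible $X$, with monotonicity then absorbing the extra elements.
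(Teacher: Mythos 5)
Your proposal is correct and follows essentially the same route as the paper: monotonicity gives $f(X^*)\le f(X\cup X^*)$, Theorem~\ref{T:equiv}b) with $A=X$, $B=X\cup X^*$ bounds $f(X\cup X^*)$ by $f(X)$ plus the marginal gains over $X^*\setminus X$, and each gain is bounded by $\delta$ with at most $|X^*|=r$ terms. No differences worth noting.
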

\begin{proof}
Note that we have $f(X^*) \leq f(X \cup X^*)$ since $f$ is monotone.
Let 
$$\delta = \max_{x \in X^* \setminus X} (f( X \cup \{x\}) - f(X))$$
be the largest marginal gain among all elements in $X^* \setminus X$.
We have 
\begin{eqnarray}
f(X^*) & \leq & f(X \cup X^*) \nonumber\\
& \leq & f(X) + \sum_{x \in X^* \setminus X} (f( X \cup \{x\}) - f(X)) \label{eq:L-delta}\\
& \leq & f(X) + |X^*| \cdot \delta \nonumber\\
& = & f(X) + r \cdot \delta. \nonumber
\end{eqnarray}
where inequality (\ref{eq:L-delta}) follows from Theorem \ref{T:equiv}(b).
\end{proof}

\begin{definition}
Let $c: 2^U\to\Real_+$ with $|U| = n<\infty$ and budget $B > 0$. 
The constraint $c(X) \le B$ is termed a \emph{cardinality} or \emph{uniform constraint} if $c(X) = |X|$ for $X\in 2^U$ and $B \le n$. Otherwise it is called a \emph{general constraint}.
\end{definition}

\begin{definition}
The maximization of a monotone submodular function under a given constraint is termed the \emph{monotone submodular maximization problem} (MSMP).
\end{definition}

\begin{definition}
\label{def:subratio}
    The submodularity ratio $\alpha_f$  of a non-negative set function $f$ is defined as
    $\alpha_f = \min_{X \subseteq Y \subseteq U, v \not \in Y} \frac{f(X \cup v) - f(X)}{f(Y \cup v) - f(Y)}.$
\end{definition}
A function $f$ is submodular iff $\alpha_f=1$ holds. In Section~\ref{sec:withArc}, we will consider general monotone objective and monotone cost functions and investigate approximations dependent of $\alpha_f$. 

When considering evolutionary algorithms for the optimization of submodular function, we work with the search space $\{0,1\}^n$, i.e. search points are binary strings of length $n$. We identify each element $u_i \in U$ with a bit $x_i$, $1 \leq i \leq n$, and define the set $X \subseteq U$ as $X =\{u_i \in U \mid x_i=1\}$. To ease the presentation we use the search point $x$ and its set of chosen elements $X$ in an interchangeable way.


\section{\onelambda without Archive}\label{sec:noArc}
We first consider the case of the optimization of a monotone submodular functions with a uniform constraint, i.e. $|x|_1 =\sum_{i=1}^n x_i \leq B$ holds for any feasible solution $x \in \{0,1\}^n$, before we consider the case of general constraints.

\subsection{Algorithm}

The $(1+\lambda)$-EA always starts at the zero string $x_j=0^n$ where $j=0,1, \ldots B$ denotes the $j$th epoch on the way to reach the bound $B$. The current bound $\Bhat$ is set to zero initially. It is clear that $x_0$ is feasible.

In each epoch $j\ge 0$ the $(1+\lambda)$-EA samples $\lambda$ offspring independently by mutation. 
For our theoretical investigations, we consider standard-bit-mutation which flip each bit independently of the others with probability $1/n$. For our experimental investigations, we consider standard-bit-mutation-plus as done in \cite{NeumannWittECAI23} which repeats standard-bit-mutation until at least one bit has been flipped.
We are seeking the best solution for the incremented bound $\Bhat$. If an offspring $y$ is feasible and not worse than its parent $x_j$ then it is accepted as a candidate for selection. After all $\lambda$ offspring have been evaluated the best candidate becomes the new best individual of epoch $j+1$.


This process repeats until the current bound $\Bhat$ exceeds the maximum bound $B$. The $(1+\lambda)$-EA is given in Algorithm \ref{alg:1plusLambda-new} in case of uniform constraint. 

\begin{algorithm}[t]
\begin{algorithmic}[1]
\STATE Set $j:=0, x_j:=0^n$, $\Bhat:=0$
\WHILE{$\Bhat < B$}
  \STATE $\Bhat = \Bhat + 1$ \label{line:incB}
  \STATE $\hat{x}=x_j$
  \FOR{$k:=1$ to $\lambda$}
    \STATE $y:=$ mutation($x_j$)
    \IF{$c(y) \le \Bhat$ \AND $f(y) \geq f(\hat{x})$}
      \STATE $\hat{x} := y$
    \ENDIF
  \ENDFOR
  \STATE $x_{j+1}:=\hat{x}$
\STATE $j:=j+1$ 
\ENDWHILE
\RETURN $x_j$
\end{algorithmic}
\caption{(1+$\lambda$) EA, input: $f, c, B, \lambda$}
\label{alg:1plusLambda-new}
\end{algorithm}

\subsection{Uniform Constraint}
\label{sec:noarcuniform}
\begin{theorem}
\label{thm:uniform}
The $(1+\lambda)$-EA finds a $(1-\frac{1}{e})$-approximation of a monotone submodular maximization problem with uniform constraint in at most $t_{\max} = 2\,e\,r\,n\,\log(n)$ function evaluations with probability $1-o(1)$, where $\lambda=2\,e\,n\,\log(n)$, $r$ is equal to the maximum budget in the constraint and $n$ is the dimension of the problem. 
\end{theorem}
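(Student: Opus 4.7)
The plan is to combine the standard greedy-style analysis of monotone submodular maximization (via Lemma~\ref{L:delta}) with a union bound over the $r$ epochs.

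First I would observe the cost accounting: the outer loop runs exactly $B=r$ times (one iteration per increment of $\Bhat$ from $0$ up to $B$), and each epoch performs $\lambda$ function evaluations, giving a total of $r\lambda = 2ern\log(n) = \temax$, matching the claimed budget. I would also establish by induction on $j$ that $x_j$ is feasible with $|x_j|_1 \le j$: the base case $x_0=0^n$ is immediate, and in epoch $j$ the feasibility filter $c(y)\le \Bhat = j+1$ ensures $|x_{j+1}|_1 \le j+1$.

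Next I would define the ``good event'' $G_j$ for epoch $j$: at least one of the $\lambda$ offspring is produced by a mutation that flips exactly the single bit corresponding to the element $x^\star\in X^*\setminus X_j$ attaining the maximum marginal gain $\delta_j = \max_{x\in X^*\setminus X_j}(f(X_j\cup\{x\})-f(X_j))$ (and no other bit). If $X^*\subseteq X_j$ the monotonicity already gives $f(x_j)\ge f(X^*)$ and we are done; otherwise such an $x^\star$ exists, feasibility of $X_j\cup\{x^\star\}$ follows from $|X_j|_1\le j$ and $\Bhat=j+1$, and the resulting offspring is accepted because its $f$-value is at least $f(x_j)+\delta_j\ge f(\hat x)$ at the moment the selection condition is tested (since $\hat x$ is initialised to $x_j$ and only ever improves). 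The probability that any single mutation performs the specific one-bit flip is $\frac{1}{n}(1-\frac{1}{n})^{n-1}\ge \frac{1}{en}$, and the $\lambda$ offspring are independent, so
\[
\Pr[\overline{G_j}] \le \left(1-\frac{1}{en}\right)^{\lambda} \le e^{-\lambda/(en)} = e^{-2\log n} = \frac{1}{n^2}.
\]
A union bound over the $r\le n$ epochs yields $\Pr\bigl[\bigcap_{j=0}^{r-1} G_j\bigr] \ge 1-\frac{r}{n^2}\ge 1-\frac{1}{n} = 1-o(1)$.

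Finally, conditional on all $G_j$ occurring, I would convert the per-epoch progress into the $(1-1/e)$-approximation by the standard recursion. By Lemma~\ref{L:delta} applied at iteration $j$, $f(X^*) \le f(x_j) + r\,\delta_j$, hence $\delta_j \ge (f(X^*)-f(x_j))/r$, and the event $G_j$ implies
\[
f(X^*) - f(x_{j+1}) \;\le\; f(X^*) - f(x_j) - \delta_j \;\le\; \left(1-\tfrac{1}{r}\right)\bigl(f(X^*)-f(x_j)\bigr).
\]
Unrolling this over the $r$ epochs gives $f(X^*) - f(x_r) \le (1-1/r)^r f(X^*) \le f(X^*)/e$, i.e.\ $f(x_r)\ge (1-1/e)\,f(X^*)$, which combined with the probability estimate above finishes the proof.

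The main obstacle I anticipate is purely bookkeeping rather than conceptual: making sure that in epoch $j$ the bit flip that adds $x^\star$ is simultaneously feasible (the $|X_j|_1\le j$ invariant is essential) and accepted in the presence of potentially several competing accepted offspring within the same epoch, and that the marginal-gain inequality from Lemma~\ref{L:delta} is applied with the correct $r=|X^*|$ (the budget $B$), so that the $(1-1/r)^r\le 1/e$ contraction carries through exactly $r$ times.
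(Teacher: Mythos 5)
Your proposal is correct and follows essentially the same route as the paper: the same per-epoch event of a specific single-bit flip with probability at least $1/(en)$ per trial, the same bound $(1-1/(en))^{\lambda}\le 1/n^2$ and union bound over $r\le n$ epochs, and the same use of Lemma~\ref{L:delta} to drive the contraction, with your gap recursion $f(X^*)-f(x_{j+1})\le(1-1/r)(f(X^*)-f(x_j))$ being just an equivalent rewriting of the paper's induction hypothesis $f(x_j)\ge[1-(1-1/r)^j]f(x^*)$. Your extra bookkeeping (the $|x_j|_1\le j$ invariant, the case $X^*\subseteq X_j$, and the observation that competition within an epoch can only help) is sound and, if anything, slightly more careful than the published argument.
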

\begin{proof}
The proof is oriented at the proofs of Theorem 1 in
\cite{NeumannWittECAI23} and Theorem 2 in 
\cite{DBLP:journals/ec/FriedrichN15} with adaptation to the context of the $(1+\lambda)$-EA.

Let $x^*$ be the optimal solution and $f(x^*)$ denote the global maximum. Assume that at each epoch $j=0,1,\ldots,r$ the EA has found a solution $x_j$ with at most $j$ elements such that
\begin{equation}\label{eq:bound}
f(x_j) \ge \left[1-\left(1-\frac{1}{r}\right)^j\right] \cdot f(x^*).
\end{equation}
If the assumption is true, then $x_r$ has the desired approximation ratio as can be seen from
$$f(x_r)\ge \left[1-\left(1-\frac{1}{r}\right)^r\right] \cdot f(x^*)\ge \left(1-\frac{1}{e}\right)\cdot f(x^*).$$
Therefore we have to establish the validity of inequality (\ref{eq:bound}) for all $r=0,1\ldots, r$, which is done by induction.

We begin with the feasible solution $x_j = (0,\ldots,0)$ at epoch $j=0$. Evidently, inequality (\ref{eq:bound}) is fulfilled since $f(x_0) \ge 0$. Now assume that $x_j$ is the current solution at time $\tau_j = j\cdot\lambda$ for $j=0,1,\ldots, r-1$. Now we can make $\lambda$ trials to find the best feasible improvement by mutation. For the best feasible improvement only a single specific bit mutation from $0$ to $1$ is necessary. As a consequence, the probability to transition from $x_j$ to $x_{j+1}$ in a single trial is lower bounded via
\begin{equation}\label{eq:transition}
\P{x_j\to x_{j+1} \text{ in single trial}} = \frac{1}{n}\,\left(1-\frac{1}{n}\right)^{n-1} \ge \frac{1}{e\,n}.
\end{equation}
As a consequence, the probability that the transition to $x_{j+1}$ does not happen in $\lambda$ trials is upper bounded by
\begin{equation*}
\begin{split}
\P{x_j\not\to x_{j+1} \text{ in } \lambda \text{ trials}} & \le
\left(1-\frac{1}{e\,n}\right)^\lambda 
=  \left(1-\frac{1}{e\,n}\right)^{2\,e\,n\,\log(n)} \\
 & =  \left[\left(1-\frac{1}{e\,n}\right)^{e\,n}\right]^{2\,\log(n)}
\le e^{-2\,\log(n)} = \frac{1}{n^2}.
\end{split}
\end{equation*}
Owing to the above bound and Boole's inequality we finally obtain
$$\bigP{\bigcup_{j=1}^r\{x_j \text{ not generated at } \tau_j\}}
\le \sum_{j=1}^r \P{x_j \text{ not generated at } \tau_j}
\le \frac{r}{n^2} \le \frac{1}{n}$$
since $r\le n$. This bound on the failure probability proves the success probability $1-o(1)$ in the statement of the theorem.

It remains to prove the induction step. According to (\ref{eq:delta}) in Lemma \ref{L:delta} we have $f(x^*) \le f(x_i)+r\,\delta_{i+1} \Leftrightarrow \delta_{i+1} \ge \frac{1}{r}\,(f(x^*)-f(x_i))$. It follows that
\begin{talign}
f(x_{j+1}) & \ge f(x_j) + \frac{1}{r}\,(f(x^*)-f(x_j)) \nonumber\\
& = f(x_j)\left(1-\frac{1}{r}\right)+\frac{1}{r}\,f(x^*) \nonumber\\
& \ge \left(1-\left(1-\frac{1}{r}\right)^j\right)\cdot f(x^*)\cdot\left(1-\frac{1}{r}\right) + \frac{1}{r}\,f(x^*)\label{eq:InductionStep}\\
&= \left(1-\frac{1}{r}\right)f(x^*)-\left(1-\frac{1}{r}\right)^{j+1}\,f(x^*)+\frac{1}{r}\,f(x^*)\nonumber \\
&= f(x^*)\left(1-\left(1-\frac{1}{r}\right)^{j+1}\right) \nonumber
\end{talign}
where (\ref{eq:InductionStep}) results from inserting the induction hypothesis.
\end{proof}



\subsection{General Constraint}
The \onelambda does not work in the more general case where the constraint is given by a linear function or the general cost function considered in \cite{DBLP:conf/ijcai/QianSYT17,NeumannWittECAI23} as can be observed by the following example instance of the classical knapsack problem.
Consider the knapsack problem where each item $i$ has weight $w_i$ and profit $p_i$ and the sum of the weight of the chosen items in any feasible solution is at most $B$.
Assume for items $i$, $1 \leq i \leq n-1$, we have $w_i=1$ and $p_i=1$ and for item $n$ we have $w_n=n-1$ and $p_n=L$ where $L$ is a large value, e.g. $L=2^n$. We set $B=n-1$. The optimal solution consists of the item $n$ only and any solution not chosen item $n$ has profit at most $n-1$. Note that choosing item $n$ plus any other item leads to an infeasible solutions. Hence, any feasible solution that is not optimal has approximation ratio at most $n/L$ which is $n/2^n$ for $L=2^n$.
The \onelambda starts with the solution $0^n$ and increases the bound iteratively. Only once $\Bhat=n-1$ holds, item $n$ may be introduced. However, once $\Bhat=n-1$ holds, a large number of the first $n-1$ items has been introduced which prevents element $n$ from being inserted. Inserting element $n$ is then only possible if all other elements are removed in the same mutation step. This leads with high probability to an exponential runtime for obtaining the optimal solution in the case of $\lambda=2e n \log n$ as chosen in Theorem~\ref{thm:uniform}. Even for $\lambda=1$, the algorithm would have included a constant fraction of the first $n-1$ elements before $\Bhat=n-1$ holds which again implies an exponential optimization time with high probability for $\lambda=1$. The arguments can be generalized to show an exponential optimization time with high probability for the \onelambda as defined in Algorithm~\ref{alg:1plusLambda-new} for any $\lambda \geq 1$.

\section{\oneone with Archive}\label{sec:withArc}
We now consider the case of general (monotone) objective function $f$ and cost function $c$ as already investigated in \cite{DBLP:conf/ijcai/QianSYT17,NeumannWittECAI23} for variants of GSEMO using multi-objective formulations. Recall that the submodularity ratio $\alpha_f$ of a given function $f$ measures how close the function is of being submodular (see Definition~\ref{def:subratio}).

\subsection{Algorithm}
For the general setting, we consider a variant of the classical \oneone. The algorithm is shown in Algorithm~\ref{alg:1plus1-new}. It starts with the solution $x_0=0^n$ and a constraint bound $\Bhat=0$.
As for \onelambda, we use standard-bit-mutation for our theoretical investigations and standard-bit-mutation-plus in the experiments.
For $\tepoch$ iterations, the single solution is improved under the current bound and solutions that are currently infeasible but still meet the bound $B$ of the given problem are added to an archive $A$. After the current epoch is finished the bound $\Bhat$ is increased by $1$ and it is checked whether the archive contains a solution feasible for the updated bound that is better than the current solution $\hat{x}$ of the algorithm. If so, the current solution $\hat{x}$ is updated with the best (now) feasible solution that can be found in the archive. The algorithm then proceeds with the next epoch consisting of $\tepoch$ steps for the updated bound and does so until $\tepoch$ steps have finally been carried out for the bound $B$ of the given problem. 
For our theoretical investigations, $\tepoch$ is the crucial parameter for the success probability and we assume that $B$ is a positive integer. Hence, we will mainly concentrate on $\tepoch$ as parameter and the total number of iterations $\temax$ can be obtained by considering that in total $B$ epochs of length $\tepoch$ are carried out. For our experiments the algorithm works with $\temax$ as an input and divides it (roughly) equally among the epochs. 
During the run, the algorithm stores at $x_{\hat B}$ the best feasible solution that it obtains for budget $\Bhat$, $0 \leq \Bhat \leq B$, and finally returns $x_{B}$ as the solution to the given problem with budget constraint $B$.

\begin{algorithm}[t]
\begin{algorithmic}[1]
\STATE Set $j:=0, x_j:=0^n$, $\Bhat:=0$, $A:=\emptyset$\;
\STATE $\tepoch = \lfloor (\temax/(\lceil B \rceil))\rfloor$
\WHILE{$(\Bhat \leq B) \wedge (t < \temax)$}
  \FOR{($k:=1$, $(k \leq \tepoch) \wedge (t < \temax)$, $k:=k+1$)}
    \STATE $y:=$ mutation($x_j$)
    \STATE $t:=t+1$
    \IF{$(c(y) > \Bhat) \wedge (c(y) \leq B)$}
    \IF{$\not \exists z \in A: (c(z) \leq c(y) \wedge f(z) > f(y))$ }
    \STATE $A:=A\cup\{y\}$
    \ENDIF
    \ENDIF
    \IF{$(c(y) \leq \Bhat) \wedge (f(y) \geq f(x_j))$}
      \STATE $x_j := y$
    \ENDIF
  \ENDFOR
  \STATE $A:=A\setminus\{y\in A: c(y) \leq \Bhat\}$
\STATE $\Bhat:= \min\{\Bhat + 1,B\}$
  \STATE $\hat{x} := x_j$
  \STATE $A^* := \{ y\in A: c(y) \leq  \Bhat\}$ 
  \IF{$|A^*| > 0$}
    \STATE $y^* := \arg\max\{ f(y): y\in A^*\}$ 
    \IF{$f(y^*) \geq f(\hat{x})$}
      \STATE $\hat{x}:=y^*$
    \ENDIF
  \ENDIF
  \STATE $j:=j+1$
  \STATE $x_{j}:=\hat{x}$
  
\ENDWHILE
\RETURN $x_j$
\end{algorithmic}
\caption{\oneone with archive, input: $f, c, B, \temax$}
\label{alg:1plus1-new}
\end{algorithm}

\subsection{Analysis}
Let $x^*_{\hat{B}}$ be an optimal solution for a reduced budget $R(B,c)$ dependent on the characteristics of the monotone cost function $c$ and given budget $\hat{B}$, $0 \leq \hat{B} \leq B$. For details on the budget reduction, we refer the reader to Equation 4 in~\cite{DBLP:conf/ijcai/QianSYT17}.
As done in \cite{NeumannWittECAI23}, we assume that $c \colon \{0,1\}^n \rightarrow \mathds{N}$ takes on non-negative integer values for our analysis.
We show that for each $\hat{B} \in \{0, \ldots, B\}$, \oneone with archive computes a solution $x_{\hat{B}}$ with 

$$c(x_{\hat{B}}) \leq \hat{B} \text{ and } f(x_{\hat{B}}) \geq \frac{\alpha_f}{2}(1- e^{-\alpha_f}) \cdot f(x^*_{\hat B})$$

where $x^*_{\hat B}$ is an optimal solution for budget $R(B,c)$.
Note that this matches the results given in Theorem 5 in \cite{DBLP:journals/ai/RoostapourNNF22} where it is shown that \gsemo computes for any possible budget up to the given budget $B$ a solution of the stated quality.

We now show that a \oneone using an archive for the solutions that exceed the current bound $\Bhat$ (but have cost at most $B$) is able to obtain the same approximation ratio as the multi-objective approaches presented in  \cite{DBLP:conf/ijcai/QianSYT17,NeumannWittECAI23}. 

Let $\delta_c = \min_{V' \subseteq V} \min_{v \not \in V'} (c(V' \cup \{v\}) - c(V')\geq 1$ the minimal possible cost increase when adding one element to any set not containing the element.

We  use 
$\tepoch=en \ln(nB^2)$ which implies $\temax \leq (B+1) \cdot \tepoch=en(B+1) \ln(nB^2) = O(nB(2\ln B +\ln n))$.

We denote by $x^*_{\Bhat}$ an optimal solution for the reduced constraint bound $R(B,c)$ dependent on $\Bhat$ and characteristics of the constraint function $c$ as done in \cite{DBLP:conf/ijcai/QianSYT17,NeumannWittECAI23}.
\begin{theorem}
\label{thm:general}
 Let $\temax = B \cdot \tepoch$ where $\tepoch \geq  en \ln (nB^2)$.
Then \oneone with archive computes with probability $1-o(1)$ for each bound $\Bhat$, $0 \leq \Bhat \leq B$, a solution $x_{\Bhat}$
with 
$$
c(x_{\Bhat}) \leq \Bhat \text{ and } f(x_{\Bhat}) \geq (\alpha_f/2) \cdot (1- e^{-\alpha_f}) \cdot f(x^*_{\Bhat})
$$
 In particular it computes a $(\alpha_f/2) (1- e^{-\alpha_f})$-approximation with probability $1-o(1)$ for the given bound $B$ when setting $\temax = B \cdot \tepoch$ with  $\tepoch\geq en \ln (nB)$.
\end{theorem}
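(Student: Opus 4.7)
The plan is to proceed by induction on $\Bhat \in \{0,1,\ldots,B\}$, showing that at the end of the epoch targeting bound $\Bhat$ the stored solution $x_{\Bhat}$ satisfies the claimed approximation, with per-epoch failure probability at most $O(1/(nB))$. The base case $\Bhat = 0$ is immediate: only $0^n$ is feasible, so $x_0 = 0^n$ trivially satisfies the bound since $f(x^*_0)=0$ by monotonicity.

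For the inductive step, I would emulate the classical greedy analysis for monotone submodular maximization with general cost constraints and submodularity ratio $\alpha_f$, in the spirit of \cite{DBLP:conf/ijcai/QianSYT17,NeumannWittECAI23}. Starting from the current $x_j$, consider the hypothetical sequence of augmentations that at each step adds an element $v \in x^*_{\Bhat}\setminus x_j$ of largest marginal gain. Applying an $\alpha_f$-weighted variant of Lemma \ref{L:delta} based on Definition \ref{def:subratio}, after $i$ such additions the value is at least $\bigl(1 - \prod_{k=1}^{i}(1-\alpha_f c_k/\Bhat)\bigr)\, f(x^*_{\Bhat})$ where $c_k$ is the marginal cost added at step $k$, which telescopes to the factor $(1-e^{-\alpha_f})$ once the budget is exhausted. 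Each such augmentation corresponds to a specific single-bit flip, occurring in any mutation step with probability at least $(1/n)(1-1/n)^{n-1} \geq 1/(en)$; hence in $\tepoch = en\ln(nB^2)$ iterations the probability of missing any specific flip is at most $(1-1/(en))^{\tepoch} \leq 1/(nB^2)$.

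The additional factor $\alpha_f/2$ enters when the greedy sequence reaches an element $v$ whose addition would violate $\Bhat$ yet still respect $B$. Standard arguments then yield that either the current feasible solution already achieves at least half the greedy target, or the (currently infeasible) augmented solution reaches the full target. In the second case the produced $y$ satisfies $\Bhat < c(y) \leq B$ and is inserted into the archive by lines~7--8 of Algorithm~\ref{alg:1plus1-new}. When the bound is later raised to a value $\Bhat'$ with $c(y) \leq \Bhat'$, lines~17--22 transfer $y$ (or a still-present solution dominating it) into $\hat{x}$, establishing the bound for that later epoch $\Bhat'$.

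The main obstacle is a careful audit of archive maintenance. The dominance filter in line~8 may discard a specific solution the analysis would like to reuse, and line~15 prunes archived solutions whose cost has become $\leq \Bhat$ before the increment. The invariant I would establish is that every discarded solution is Pareto-dominated (in the $(c,-f)$ sense) by one that remains, so that the best $f$-value available to $\hat{x}$ at each bound update equals the maximum $f$-value ever produced at cost $\leq \Bhat$. This mirrors the Pareto-preservation property driving the GSEMO analyses, now in a single-objective algorithm augmented by an archive. Once that bookkeeping is in place, a union bound over the at most $B+1 \leq n+1$ epochs and the at most $B$ greedy steps per epoch yields a total failure probability of $O(1/n) = o(1)$. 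The weaker requirement $\tepoch \geq en\ln(nB)$ suffices whenever only the final $x_B$ must satisfy the approximation, since then only a single-epoch failure event needs to be controlled.
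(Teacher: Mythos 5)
Your overall route---the $\alpha_f$-greedy augmentation analysis from \cite{DBLP:conf/ijcai/QianSYT17}, single-bit-flip probability $1/(en)$, per-event failure probability $1/(nB^2)$, and a union bound over $O(B^2)$ events---is the same as the paper's, but two steps are genuinely broken or missing. First, the $\alpha_f/2$ factor. The standard argument in the overflow case $c(y'_{\Bhat})>\Bhat$ is purely analytic: $f(v^*_{\Bhat})+f(y_{\Bhat})\geq \alpha_f f(y'_{\Bhat}) \geq \alpha_f(1-e^{-\alpha_f})f(x^*_{\Bhat})$, where $v^*_{\Bhat}$ is the best \emph{single element} of cost at most $\Bhat$, so one of the two \emph{feasible} solutions $v^*_{\Bhat}$ or $y_{\Bhat}$ already witnesses the $(\alpha_f/2)(1-e^{-\alpha_f})$-approximation for bound $\Bhat$. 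Your dichotomy ("either the current feasible solution achieves half the target, or the infeasible augmented solution reaches the full target and is archived for a later bound") does not deliver this: the archived solution with cost in $(\Bhat,B]$ is infeasible for $\Bhat$ and hence cannot witness the per-$\Bhat$ claim, and at the later bound $\Bhat'$ it only gives $f(x_{\Bhat'})\geq(1-e^{-\alpha_f})f(x^*_{\Bhat})$, which is measured against the wrong optimum since $f(x^*_{\Bhat'})\geq f(x^*_{\Bhat})$. Moreover, when $y_{\Bhat}$ is poor you need $v^*_{\Bhat}$, and your sketch never shows the algorithm produces these candidates; the paper devotes a separate case to exactly this (each $v^*_{\Bhat}$ is a single-bit flip of $0^n$ obtained during the first epoch with failure probability at most $1/(nB^2)$, then preserved by the archive and the bound-update step, with a union bound over all $\Bhat$).

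Second, the scheduling of the greedy steps. Your induction claims the approximation for $\Bhat$ is reached "at the end of the epoch targeting bound $\Bhat$", but up to about $B$ greedy augmentations may be required for target $\Bhat$, and one epoch of length $en\ln(nB^2)$ only yields \emph{one} specific bit flip with high probability; for large $B$ the required flips cannot all occur inside a single epoch and must be spread over the epochs $0,\ldots,\Bhat$, roughly one per epoch. This is what the paper's invariant $C_{B'}\geq\Bhat$ encodes, and its crux is a selection argument your sketch omits: only the current solution $x_j$ is ever mutated (archive members never are), so one must argue that in the epoch with $\Bhat=C_{B'}$ the solution $x_j$ satisfies $c(x_j)\leq C_{B'}$ and $f(x_j)\geq f(y_{B'})$, hence itself meets the progress condition of Equation~\ref{eq:progress}, and that the offspring obtained by flipping the largest-marginal-gain bit is retained either as the new current solution or, if its cost lies in $(\Bhat,B]$, in the archive, increasing $C_{B'}$ by at least $\delta_c\geq 1$. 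Your Pareto-preservation bookkeeping addresses retention but not this point; without it, the union bound over "at most $B$ greedy steps per epoch" has no concrete event to attach to, because the solution from which the analysis wants to continue the greedy chain may not be the one the algorithm mutates.
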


\begin{proof}
We start with some observations. For each bound $B'$, $0 \leq B' \leq B$ the algorithm maintains a solution $x$ with $c(x) \leq B'$  that has the highest function value among all solution of cost at most $B'$ obtained during the run.
This solution is either contained in $A \cup \{\hat{x}\}$ or stored at $x_{\Bhat}$ when the bound $\Bhat=B'$ is reached.

According to \cite{DBLP:conf/ijcai/QianSYT17}, for a given bound $\Bhat$ a solution with the stated approximation quality can be obtained by selection the best out of two solutions.
\paragraph{Case 1:}
The first solution consists of the single element $v^* \in V$ of highest function value among all solution with a single element $v$ for which $c(v) \leq \Bhat$ holds.
If such a $v_{\Bhat}^*$ with $c(v^*) \leq \Bhat$ is a $(\alpha_f/2) (1- 1/e^{\alpha_f})$-approximation for bound $\Bhat$, then it can be obtained by flipping the bit corresponding to $v_{\Bhat}^*$ and no other bit in the initial solution $0^n$. The probability for this to happen in a single mutation step applied to $0^n$ is at least $1/(en)$.
The solution $0^n$ is chosen $\tepoch$ times for mutation and the probability that a feasible solution for $\Bhat$ with function value at least $f(v_{\Bhat}^*)$ has not been obtained is upper bounded by
\begin{equation}
\label{eq:prob}
    (1-1/en)^{\tepoch} \leq (1-1/en)^{en \ln (nB^2)} \leq 1/(nB^2).
\end{equation}

Using the union bound the probability that for at least one value of $\Bhat \in \{1, \ldots, B\}$ such a solution has not been obtained is $1/(nB)$.

\paragraph{Case 2:}
If selecting element $v_{\Bhat}^*$ only does not yield a $(\alpha_f/2) (1- 1/e^{\alpha_f})$-approximation for bound $\Bhat$, then a solution with the desired approximation quality can be obtained by incrementally adding an element with the largest marginal gain to the solution $y_{\Bhat}$
with 
\begin{equation} c(y_{\Bhat}) \leq C_{\Bhat} \text{ and }
f(y_{\Bhat}) \geq \left(1- e^{-\alpha_f C_{\Bhat}/B} \right) \cdot f(x^*_{\Bhat}).
\label{eq:progress}
\end{equation}
Note that the search point $0^n$ meets the condition for $C_{\Bhat}=0$.

For a given solution $x$, let $N(x)$ be the set of all solutions that can be obtained from $x$ by flipping a single $0$-bit in $x$.  Then we call
$$y = \arg \max_{z \in N(x)}(f(z) - f(x))/(c(z)-c(x))$$ 
a solution with the largest marginal gain with respect to $x$ and the considered objective and cost function. The element contained in $y$ but not in $x$ is an element with the largest marginal gain.

According to \cite{DBLP:conf/ijcai/QianSYT17}, adding an element with the largest marginal gain to $y_{\Bhat}$ results in a solution $y'_{\Bhat}$ with
\begin{eqnarray*}
f(y'_{\Bhat}) & \geq &  f(y_{\Bhat}) + \alpha_f \cdot \frac{c(y'_{\Bhat})-c(y_{\Bhat})}{\hat{B}} \cdot (f(x^*_{\Bhat}) - f(y_{\Bhat}))\\
& \geq &  f(y_{\Bhat}) + \alpha_f \cdot \frac{\delta_{\hat{c}}}{\hat{B}} \cdot (f(x^*_{\Bhat}) - f(y_{\Bhat}))\\
& \geq &  \left(1- \alpha_f \cdot \frac{\delta_{\hat{c}}}{\hat{B}}\right) f(y_{\Bhat}) + \alpha_f \cdot \frac{\delta_{\hat{c}}}{\hat{B}} \cdot f(x^*_{\Bhat})\\
& \geq &   \left(1- \alpha_f \cdot \frac{\delta_{\hat{c}}}{\hat{B}}\right) \left(1- e^{-\alpha_f C_{\Bhat}/\hat{B}} \right) \cdot f(x^*_{\Bhat}) + \alpha_f \cdot \frac{\delta_{\hat{c}}}{\hat{B}} \cdot f(x^*_{\Bhat})\\
& \geq &  \left(1- \left(1- \alpha_f \cdot \frac{\delta_{\hat{c}}}{\hat{B}}\right) \cdot e^{-\alpha_f C_{\Bhat}/B} \right)f(x^*_{\Bhat})\\
& \geq & \left(1-  e^{-\alpha_f \cdot \frac{\delta_{\hat{c}}}{\hat{B}}} \cdot e^{-\alpha_f C_{\Bhat}/\hat{B}}\right)f(x^*_{\Bhat}) \\
& \geq & \left(1- e^{-\alpha_f( C_{\Bhat}+ \delta_{\hat{c}})/\hat{B}} \right) \cdot f(x^*_{\Bhat})
\end{eqnarray*}

Note that for $C_{\Bhat}+ \delta_{\hat{c}} \geq \Bhat$, we have 
  $$ f(y'_{\Bhat}) \geq \left(1- e^{-\alpha_f( C_{\Bhat}+ \delta_{\hat{c}})/\hat{B}} \right) \cdot f(\hat{x}_{\Bhat})
\geq  (1- e^{-\alpha_f}) \cdot f(x^*_{\Bhat}).
  $$
. 

We consider the case when $c(y'_{\Bhat}) >\Bhat$. 
We have $f(v_{\Bhat}^*)\geq \alpha_f \cdot (f(y'_{\Bhat}) -f(y_{\Bhat}))$ as $f$ is $\alpha_f$-submodular which implies

$$f(v_{\Bhat}^*) + f(y_{\Bhat}) \geq \alpha_f \cdot f(y'_{\Bhat}) \geq \alpha_f \cdot
 \left(1- e^{-\alpha_f} \right) \cdot f(x^*_{\Bhat}) 
$$

and therefore $\max\{f(v_{\Bhat}^*),f(y_{\Bhat}))\} \geq  (\alpha_f/2) \cdot \left(1- e^{-\alpha_f} \right) \cdot f(x^*_{\Bhat}). $

If $c(y'_{\Bhat})>\Bhat$ and  $v_{\Bhat}^*$ is not a $(\alpha_f/2) (1- e^{-\alpha_f})$-approximation for bound $\Bhat$ then we have 
$$f(y_{\Bhat}) \geq (\alpha_f/2) (1- e^{-\alpha_f}) \cdot f(x^*_{\Bhat})
$$

For $\Bhat=0$, the solution $x_0=0^n$ has the desired approximation quality.
Let $B' \in \{1, \ldots, \Bhat\}$ and consider the solution $y_{B'} \in A \cup \{x_j\}$ that meets the condition for the largest possible value of $C_{B'}$ according to Equation~\ref{eq:progress}.

We claim that the invariant $C_{B'} \geq \Bhat$ holds during the run of the algorithm with high probability for any $B'$ and $\Bhat$ as long as the desired approximation has not been reached.
For $\Bhat=0$, the solution $x_0=0^n$ obviously fullfills the condition for $C_{B'}=0$ for any $B'$.

We consider the time when $\Bhat=C_{B'}$ holds for the first time and show that $C_{B'}$ increases with high probability. Note that this is a pessimistic assumption as $C_{B'}$ might increase earlier by creating an offspring of the current solution $x_j$.
We consider the epoch of the next $\tepoch$ steps once reached $\Bhat=C_{B'}$.
Note that in this epoch only solutions $x_j$ with $c(x_j) \leq C_{B'}$ and $f(x_j) \geq f(y_{\Bhat})$ are chosen for mutation which implies that they meet the condition of Equation~\ref{eq:progress} for $C_{B'}$.

The probability that there is no mutation step adding the element with the largest marginal gain to the current solution $x_j$ and therefore increasing $C_{B'}$ by producing an offspring $y$ (line 5 of Algorithm~\ref{alg:1plus1-new}) by at least $\delta_c \geq 1$  is at most $1/(nB^2)$ (see Equation~\ref{eq:prob}). 

The value of $C_{B'}$ needs to be increase at most $B$ times and hence we  obtain a solution $y_{B'}$ with  
$f(y_{B'}) \geq (\alpha_f/2) (1- e^{-\alpha_f}) \cdot f(x^*_{B'})
$
with probability at least $1-1/(nB)$ if element $v_{B'}^*$ from Case 1 does not give the desired approximation.

There are $B$ different values of $B'$ which implies that for all values of $\Bhat$, $0 \leq \Bhat \leq B$, 
a solution $x_{\Bhat}$ with
$f(x_{\Bhat}) \geq (\alpha_f/2) (1- e^{-\alpha_f}) \cdot f(x^*_{\Bhat})$ 
is obtained with probability at least $(1-1/n - 1/(nB) \geq 1-2/n$ within the run of the algorithm.
If the case where we are only interested in the desired approximation for the given bound $B$, choosing $\temax = B \cdot \tepoch$ with  $\tepoch\geq en \ln (nB)$ suffices as the element $v^*_B$ is introduced with probability at least $1/(nB)$ in this case and the solution $y_B$ is obtained with probability at least $1/n$ by considering at most $B$ increases of $C_B$.
\end{proof}

The previous result can be adapted to the case of a uniform constraint $|x|_1 \leq r$ as considered in Section~\ref{sec:noarcuniform}. As the cost of each item is $1$, we can obtain the following corollary by adjusting the proofs of Theorem~\ref{thm:uniform} and \ref{thm:general}.

\begin{corollary}
    Let $\temax = r \cdot \tepoch$ where $\tepoch \geq  2en \log n$.
Then \oneone with archive computes a $(1- 1/e)$-approximation with probability $1-o(1)$.
\end{corollary}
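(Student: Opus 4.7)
The plan is to transport the inductive argument from the proof of Theorem~\ref{thm:uniform} onto the epoch structure of Algorithm~\ref{alg:1plus1-new}, exploiting that a uniform cardinality constraint gives $c(x) = |x|_1$, $\alpha_f = 1$, $\delta_c = 1$ and $B = r$. Let $x^*$ be an optimal solution with $|x^*| = r$, write $x_j$ for the incumbent carried at the start of the $(j+1)$th while-iteration (so $x_0 = 0^n$ and $\Bhat = j$ at that point), and show by induction on $j \in \{0,1,\ldots,r\}$ that
\begin{equation*}
f(x_j) \ge \left(1 - \left(1 - \tfrac{1}{r}\right)^{j}\right) f(x^*).
\end{equation*}
Setting $j = r$ then yields the $(1-1/e)$-approximation; the base case is trivial since $f(x_0) \ge 0$.

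For the induction step the argument mirrors Theorem~\ref{thm:uniform}: by Lemma~\ref{L:delta} there is an element $v \in x^* \setminus x_j$ whose marginal gain relative to $x_j$ is at least $(f(x^*)-f(x_j))/r$, and the mutation that flips exactly the $v$-bit and no other bit in $x_j$ produces an offspring $y$ with $|y|_1 \le j+1$ and $f$-value large enough to lift the induction hypothesis. The probability that a single standard-bit mutation realises this specific event is at least $(1/n)(1-1/n)^{n-1} \ge 1/(en)$, so the probability that none of the $\tepoch \ge 2en\log n$ trials in the epoch succeeds is bounded by $(1 - 1/(en))^{2en\log n} \le 1/n^{2}$, exactly as in the proof of Theorem~\ref{thm:uniform}.

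The step that must be verified separately, and the main obstacle relative to the \onelambda analysis, is that such a $y$ really reappears as $x_{j+1}$ once the epoch closes. If $|y|_1 \le j = \Bhat$, the feasibility-plus-acceptance rule inside the for-loop replaces $x_j$ with something at least as good as $y$ on the spot. Otherwise $|y|_1 = j+1$ (the typical case when $|x_j|_1 = j$), and $\Bhat < c(y) \le B$ triggers insertion into $A$ (or leaves an equally-or-better stored element in place via the dominance test). Crucially, the archive pruning that is performed before $\Bhat$ is incremented still uses the old bound $j$, so it cannot remove $y$ which has cost $j+1$; once $\Bhat$ becomes $j+1$, the post-loop block scans $A$ for the best element of cost at most $\Bhat$ and promotes it to $x_{j+1}$ if it beats $x_j$, so $f(x_{j+1}) \ge f(y)$. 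Intermediate updates of $x_j$ during the epoch can only improve matters, so treating $f(x_j)$ as constant is pessimistic. A union bound over the at most $r \le n$ epochs then caps the total failure probability by $r/n^2 \le 1/n = o(1)$, and the total budget of $r \cdot \tepoch$ evaluations meets the corollary's hypothesis.
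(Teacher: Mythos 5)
Your proof is correct and is essentially the adaptation the paper itself gestures at (the paper gives no explicit proof, merely stating that the corollary follows by adjusting the proofs of Theorems~\ref{thm:uniform} and~\ref{thm:general}): you transplant the induction of Theorem~\ref{thm:uniform} onto the epoch/archive mechanics of Algorithm~\ref{alg:1plus1-new}, with the correct per-epoch failure bound $1/n^2$ and union bound over $r\le n$ epochs. The only loose spot is the case where the improving offspring $y$ of cost $j+1$ is kept out of the archive by a dominating element $z$ with $c(z)=j$, which is then pruned before the promotion step; this is harmless because such a $z$ can only have entered the archive before the bound-$j$ epoch began and was therefore available at the previous promotion, so the incumbent already satisfies $f(x_j)\ge f(z)>f(y)$ and the induction step goes through unchanged.
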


\begin{table*}[htbp]
 \caption{Maximum coverage scores obtained by $(1+1)$ EA with archive and $(1+\lambda)$-EA on medium instances}
\small
    \centering
    \begin{tabular}{|c|c|c||c|c|c|c|c||c|c|c|c|c|} \hline 
       & & & \multicolumn{5}{|c||}{\bfseries Uniform} & \multicolumn{5}{|c|}{\bfseries Random}\\ \hline
 & & &      \multicolumn{2}{|c}{\bfseries $(1+1)$ EA } & \multicolumn{2}{|c|}{\bfseries $(1+\lambda)$-EA} & & \multicolumn{2}{|c|}{\bfseries $(1+1)$ EA} & \multicolumn{2}{c|}{\bfseries $(1+\lambda)$-EA} & \\ 
 Graph & $B$ & $t_{\max}$ &     Mean & Std & Mean & Std & $p$-value & Mean & Std & Mean & Std & $p$-value  \\
\hline
 \multirow{12}{*}{ca-CSphd} &  10 &  100000 &  222 & 0.000 & 222 & 0.000 & 1.000 & 234 & 7.764 & 235 & 9.296 & 0.695\\
 &  10 &  500000 &  222 & 0.000 & 222 & 0.000 & 1.000 & 242 & 11.846 & 240 & 11.939 & 0.399\\
 &  10 &  1000000 &  222 & 0.000 & 222 & 0.000 & 1.000 & 245 & 12.735 & 240 & 12.294 & 0.165\\
 &  43 &  100000 &  600 & 0.728 & 599 & 0.711 & 0.174 & 601 & 12.206 & 605 & 10.715 & 0.206\\
 &  43 &  500000 &  600 & 0.000 & 600 & 0.000 & 1.000 & 614 & 12.788 & 608 & 11.688 & 0.038\\
 &  43 &  1000000 &  600 & 0.000 & 600 & 0.000 & 1.000 & 619 & 12.417 & 609 & 13.414 & 0.008\\
 &  94 &  100000 &  927 & 0.964 & 928 & 0.750 & 0.191 & 932 & 11.212 & 931 & 12.182 & 0.589\\
 &  94 &  500000 &  928 & 0.365 & 928 & 0.000 & 0.824 & 947 & 11.671 & 940 & 11.732 & 0.035\\
 &  94 &  1000000 &  928 & 0.000 & 928 & 0.000 & 1.000 & 950 & 10.884 & 946 & 12.128 & 0.041\\
 &  188 &  100000 &  1278 & 1.413 & 1279 & 1.081 & 0.264 & 1304 & 12.423 & 1300 & 13.157 & 0.198\\
 &  188 &  500000 &  1279 & 0.702 & 1279 & 0.809 & 0.584 & 1321 & 12.863 & 1317 & 12.944 & 0.198\\
 &  188 &  1000000 &  1279 & 0.629 & 1279 & 0.629 & 1.000 & 1326 & 12.805 & 1322 & 13.281 & 0.212\\
\hline
 \multirow{12}{*}{ca-GrQc} &  12 &  100000 &  505 & 7.158 & 506 & 5.649 & 0.923 & 524 & 19.509 & 524 & 23.457 & 0.947\\
 &  12 &  500000 &  510 & 0.000 & 510 & 0.000 & 1.000 & 551 & 24.536 & 544 & 19.123 & 0.193\\
 &  12 &  1000000 &  510 & 0.000 & 510 & 0.000 & 1.000 & 563 & 19.987 & 555 & 22.822 & 0.132\\
 &  64 &  100000 &  1512 & 6.872 & 1512 & 6.518 & 0.690 & 1554 & 22.583 & 1547 & 22.618 & 0.261\\
 &  64 &  500000 &  1526 & 4.312 & 1526 & 4.476 & 0.668 & 1608 & 27.234 & 1596 & 25.095 & 0.147\\
 &  64 &  1000000 &  1529 & 2.380 & 1529 & 1.954 & 0.807 & 1631 & 20.223 & 1609 & 20.411 & 0.000\\
 &  207 &  100000 &  2742 & 6.646 & 2738 & 9.464 & 0.076 & 2766 & 22.039 & 2759 & 18.237 & 0.209\\
 &  207 &  500000 &  2769 & 4.720 & 2765 & 5.845 & 0.008 & 2844 & 21.315 & 2834 & 15.824 & 0.072\\
 &  207 &  1000000 &  2773 & 5.045 & 2768 & 4.480 & 0.000 & 2864 & 19.825 & 2845 & 16.747 & 0.001\\
 &  415 &  100000 &  3565 & 6.066 & 3557 & 9.051 & 0.000 & 3581 & 14.673 & 3563 & 14.940 & 0.000\\
 &  415 &  500000 &  3606 & 4.066 & 3602 & 4.468 & 0.001 & 3650 & 11.331 & 3631 & 12.922 & 0.000\\
 &  415 &  1000000 &  3612 & 4.838 & 3605 & 4.219 & 0.000 & 3664 & 12.773 & 3646 & 13.563 & 0.000\\
\hline
 \multirow{12}{*}{Erdos992} &  12 &  100000 &  601 & 2.773 & 600 & 2.682 & 0.359 & 645 & 15.347 & 649 & 16.234 & 0.333\\
 &  12 &  500000 &  604 & 0.000 & 604 & 0.000 & 1.000 & 668 & 19.263 & 669 & 23.397 & 0.947\\
 &  12 &  1000000 &  604 & 0.000 & 604 & 0.000 & 1.000 & 687 & 23.483 & 678 & 21.527 & 0.183\\
 &  78 &  100000 &  2454 & 6.173 & 2453 & 6.653 & 0.318 & 2398 & 31.071 & 2451 & 30.601 & 0.000\\
 &  78 &  500000 &  2472 & 0.890 & 2472 & 0.968 & 0.953 & 2489 & 37.452 & 2507 & 32.866 & 0.042\\
 &  78 &  1000000 &  2472 & 0.819 & 2473 & 0.629 & 0.988 & 2522 & 38.457 & 2521 & 37.696 & 0.988\\
 &  305 &  100000 &  4707 & 8.677 & 4718 & 12.322 & 0.000 & 4563 & 27.503 & 4613 & 34.571 & 0.000\\
 &  305 &  500000 &  4771 & 2.096 & 4772 & 1.489 & 0.061 & 4723 & 25.279 & 4750 & 21.008 & 0.000\\
 &  305 &  1000000 &  4774 & 0.937 & 4775 & 0.845 & 0.034 & 4752 & 25.387 & 4767 & 22.048 & 0.018\\
 &  610 &  100000 &  5240 & 5.026 & 5234 & 6.182 & 0.000 & 5236 & 13.125 & 5203 & 18.913 & 0.000\\
 &  610 &  500000 &  5263 & 0.937 & 5262 & 1.196 & 0.001 & 5314 & 8.737 & 5299 & 10.183 & 0.000\\
 &  610 &  1000000 &  5264 & 0.730 & 5263 & 1.155 & 0.745 & 5324 & 8.568 & 5312 & 9.105 & 0.000\\
\hline

    \end{tabular}
    \label{tab:combinedS}
\end{table*}

\begin{table*}[htbp]
 \caption{Maximum coverage scores obtained by $(1+1)$ EA with archive and $(1+\lambda)$-EA on large instances}
\small
    \centering
    \begin{tabular}{|c|c|c||c|c|c|c|c||c|c|c|c|c|} \hline 
       & & & \multicolumn{5}{|c||}{\bfseries Uniform} & \multicolumn{5}{|c|}{\bfseries Random}\\ \hline
 & & &      \multicolumn{2}{|c}{\bfseries $(1+1)$ EA } & \multicolumn{2}{|c|}{\bfseries $(1+\lambda)$-EA} & & \multicolumn{2}{|c|}{\bfseries $(1+1)$ EA} & \multicolumn{2}{c|}{\bfseries $(1+\lambda)$-EA} & \\ 
 Graph & $B$ & $t_{\max}$ &     Mean & Std & Mean & Std & $p$-value & Mean & Std & Mean & Std & $p$-value  \\
\hline
 \multirow{12}{*}{ca-HepPh} &  13 &  100000 &  1807 & 26.271 & 1804 & 29.228 & 0.584 & 1845 & 51.305 & 1850 & 39.910 & 0.723\\
 &  13 &  500000 &  1842 & 15.291 & 1843 & 9.754 & 0.695 & 1913 & 47.774 & 1915 & 52.637 & 0.525\\
 &  13 &  1000000 &  1840 & 5.112 & 1842 & 8.632 & 0.657 & 1938 & 61.109 & 1919 & 50.702 & 0.126\\
 &  105 &  100000 &  4627 & 28.443 & 4611 & 33.286 & 0.050 & 4659 & 35.830 & 4671 & 43.630 & 0.196\\
 &  105 &  500000 &  4762 & 17.618 & 4757 & 19.511 & 0.315 & 4876 & 50.292 & 4860 & 41.453 & 0.176\\
 &  105 &  1000000 &  4787 & 11.162 & 4787 & 11.396 & 0.784 & 4938 & 39.003 & 4908 & 44.298 & 0.015\\
 &  560 &  100000 &  8514 & 25.487 & 8486 & 20.989 & 0.000 & 8550 & 44.136 & 8517 & 32.833 & 0.003\\
 &  560 &  500000 &  8766 & 10.316 & 8745 & 12.357 & 0.000 & 8887 & 29.469 & 8847 & 22.758 & 0.000\\
 &  560 &  1000000 &  8796 & 9.713 & 8783 & 11.683 & 0.000 & 8956 & 35.481 & 8906 & 29.424 & 0.000\\
 &  1120 &  100000 &  10200 & 20.861 & 10161 & 17.307 & 0.000 & 10205 & 24.322 & 10154 & 21.882 & 0.000\\
 &  1120 &  500000 &  10460 & 9.395 & 10438 & 10.915 & 0.000 & 10533 & 17.225 & 10488 & 19.822 & 0.000\\
 &  1120 &  1000000 &  10492 & 7.397 & 10470 & 8.358 & 0.000 & 10595 & 18.362 & 10550 & 15.359 & 0.000\\
\hline
 \multirow{12}{*}{ca-AstroPh} &  14 &  100000 &  2862 & 59.952 & 2865 & 48.219 & 0.988 & 2856 & 85.887 & 2892 & 92.832 & 0.130\\
 &  14 &  500000 &  2965 & 13.289 & 2972 & 9.649 & 0.019 & 3005 & 75.418 & 3025 & 70.585 & 0.274\\
 &  14 &  1000000 &  2978 & 4.249 & 2978 & 4.690 & 0.941 & 3038 & 80.257 & 3030 & 56.286 & 0.525\\
 &  133 &  100000 &  8337 & 60.951 & 8326 & 53.037 & 0.464 & 8391 & 58.454 & 8402 & 78.692 & 0.605\\
 &  133 &  500000 &  8679 & 28.093 & 8677 & 24.020 & 0.848 & 8805 & 54.412 & 8816 & 50.448 & 0.579\\
 &  133 &  1000000 &  8722 & 24.139 & 8725 & 16.979 & 0.779 & 8931 & 47.486 & 8912 & 53.551 & 0.072\\
 &  895 &  100000 &  15015 & 35.029 & 14948 & 37.983 & 0.000 & 15017 & 42.388 & 14973 & 40.257 & 0.000\\
 &  895 &  500000 &  15559 & 20.849 & 15538 & 16.778 & 0.000 & 15653 & 30.599 & 15609 & 35.076 & 0.000\\
 &  895 &  1000000 &  15638 & 10.944 & 15621 & 10.409 & 0.000 & 15779 & 33.990 & 15729 & 23.735 & 0.000\\
 &  1790 &  100000 &  17018 & 24.879 & 16936 & 23.255 & 0.000 & 17000 & 20.884 & 16921 & 26.926 & 0.000\\
 &  1790 &  500000 &  17438 & 8.670 & 17412 & 12.840 & 0.000 & 17488 & 15.484 & 17437 & 18.276 & 0.000\\
 &  1790 &  1000000 &  17491 & 8.581 & 17469 & 6.715 & 0.000 & 17579 & 11.757 & 17524 & 11.488 & 0.000\\
\hline
 \multirow{12}{*}{ca-CondMat} &  14 &  100000 &  1759 & 48.528 & 1753 & 68.135 & 0.807 & 1695 & 94.934 & 1758 & 95.933 & 0.010\\
 &  14 &  500000 &  1853 & 3.960 & 1853 & 4.925 & 0.569 & 1857 & 68.802 & 1885 & 62.486 & 0.048\\
 &  14 &  1000000 &  1856 & 3.108 & 1857 & 2.580 & 0.079 & 1875 & 65.595 & 1897 & 68.254 & 0.287\\
 &  146 &  100000 &  6633 & 56.195 & 6654 & 63.743 & 0.225 & 6546 & 81.784 & 6635 & 77.537 & 0.000\\
 &  146 &  500000 &  7030 & 21.828 & 7042 & 22.911 & 0.030 & 7050 & 54.530 & 7079 & 52.443 & 0.026\\
 &  146 &  1000000 &  7082 & 10.261 & 7079 & 14.222 & 0.610 & 7151 & 59.810 & 7176 & 70.114 & 0.179\\
 &  1068 &  100000 &  15744 & 58.965 & 15672 & 54.984 & 0.000 & 15721 & 61.110 & 15634 & 66.097 & 0.000\\
 &  1068 &  500000 &  16671 & 24.122 & 16655 & 28.693 & 0.013 & 16761 & 45.483 & 16728 & 44.291 & 0.006\\
 &  1068 &  1000000 &  16797 & 17.002 & 16789 & 17.744 & 0.149 & 16980 & 45.050 & 16924 & 45.781 & 0.000\\
 &  2136 &  100000 &  19150 & 32.710 & 18968 & 48.385 & 0.000 & 19130 & 36.406 & 18961 & 46.673 & 0.000\\
 &  2136 &  500000 &  20039 & 18.032 & 20003 & 21.860 & 0.000 & 20091 & 24.887 & 20025 & 25.165 & 0.000\\
 &  2136 &  1000000 &  20167 & 12.032 & 20148 & 10.722 & 0.000 & 20292 & 21.225 & 20219 & 22.946 & 0.000\\
\hline
 
    \end{tabular}   
    \label{tab:combinedL}
\end{table*}

\section{Experimental Investigations}\label{sec:Exp}

We now carry out experimental investigations for the two single-objective evolutionary approaches introduced for optimizing constrained submodular problems.
We evaluate the algorithms on instances of the NP-hard maximum coverage problem which is a classical submodular optimization problem defined on graphs.

We use standard-bit-mutation-plus~(see ~\cite{NeumannWittECAI23}) as mutation operator for the algorithm and run them on the same instances as done in~\cite{NeumannWittECAI23}. 
The underlying graphs are sparse graphs from the network repository. 
In the uniform case, all nodes of weight $1$ whereas in the random case each node have been assigned a cost independently of the others chosen uniformly at random in $[0.5, 1.5]$.
For each uniform and random setting we consider the same bounds $B$ and number of fitness evaluations $\temax$ as done in \cite{NeumannWittECAI23}, namely
$B= \log_2 n, \sqrt{n}, \lfloor n/20 \rfloor, \lfloor n/10 \rfloor$
and
$\temax=100000, 500000, 1000000$. 
A result is called
statistically significant if the $p$-value is at most $0.05$. 

We first consider the graphs ca-CSphd, ca-GrQc, and Erdos992 consisting of 1882, 4158, and
6100 nodes, respectively. The results are shown in Table~\ref{tab:combinedS}. For the graph ca-CSphd, the results obtained by the two algorithms are very similar in both the uniform and random setting. For the graph ca-GrQc, the results differ only for large values of $B$ and the statistical test also obtains small $p$-values in this case. Similar observations can be made for the graph Erdos992. Small $p$-values can usually be observed together with a better performance of \oneone with archive. This is especially the case for the random setting.

We now consider results for the larger graphs ca-HepPh, ca-AstroPh, ca-CondMat, which consist of $11204$, $17903$, $21363$ nodes, respectively. The results are shown in Table~\ref{tab:combinedL} and overall follow the same trend. Overall, the advantage of \oneone with archive over \onelambda can be observed again for the larger values of $B$. The difference in mean also becomes larger for both the uniform and random instances and statistical tests show a stronger difference for the large instances then for the medium size ones.

Overall, it can be observed that the quality of solutions obtained by \onelambda and \oneone with archive are very similar. Only for larger random instances with a large constraint bound, the \oneone with archive obtains significantly better results. Doing a cross comparison with the results presented in \cite{NeumannWittECAI23}, we observe the the results of \onelambda and \oneone with archive are better then the ones of \gsemo and slightly inferior then the ones of \swgsemo.
We can therefore say that our newly introduced simple single-objective algorithms provide a good alternative to the more complex multi-objective setups and even outperform standard approaches based on \gsemo for the considered instances.

\section{Conclusions}\label{sec:concl}
The maximization of submodular functions under constraints captures a wide range of NP-hard combinatorial optimization problems. In addition to classical greedy algorithms, Pareto optimzation approaches relaxing a given constraint into an additional objective have shown to obtain state of the art results from a theoretical and practical perspective. Contrary to this, it has been shown that standard single-objective approaches using the classical (1+1)~EA easily get stuck in local optima. 
We presented adaptive single-objective approaches increasing the set of feasible solution incrementally during the optimization process. For the \onelambda, we have shown that this leads to best possible theoretical performance guarantee for the case of a uniform constraint. For more general monotone cost constraints, we presented a \oneone with archive and have shown that this algorithm obtains state of the art theoretical guarantees. 
Our experimental investigations show that both algorithms perform quite similar and outperform the standard \gsemo approach for the considered settings.

\noindent\textbf{Acknowledgments.}
This work has been supported by the Australian Research Council (ARC) through grant
FT200100536.

\end{document}